\newcommand{\ordo}{\mathcal{O}}
\newtheorem{theorem}{Theorem}
\newtheorem{lemma}[theorem]{Lemma}
\newtheorem{proposition}[theorem]{Proposition}
\theoremstyle{definition}
\newtheorem{definition}[theorem]{Definition}
\newtheorem{example}[theorem]{Example}
\newcommand{\tr}{{\rm trace}}
\newcommand{\diag}{{\rm diag}}
\newcommand{\mR}{{\mathbb R}}
\newcommand{\ett}{{\bf 1}}
\newcommand{\Real}{\ensuremath{\mathbb{R}}}
\newcommand{\RecSpace}{\ensuremath{X}}
\newcommand{\DataSpace}{\ensuremath{Y}}
\DeclareMathOperator{\ForwardOp}{\ensuremath{\mathcal{T}}}
\DeclareMathOperator{\ForwardOpInvP}{\ensuremath{\ForwardOp^{\dagger}}}
\DeclareMathOperator{\LogLikelihood}{\ensuremath{\mathcal{L}}}
\newcommand{\loss}{L}
\newcommand{\Regularizer}{S}
\newcommand{\signal}{\ensuremath{f}}
\newcommand{\truesignal}{\signal_{\text{true}}}
\newcommand{\data}{\ensuremath{g}}
\newcommand{\noise}{\delta\data}
\newcommand{\domain}{\Omega}
\newcommand{\vparam}{\ensuremath{\boldsymbol{\Theta}}}
\newcommand{\marginalone}{\ensuremath{\mu_0}}
\newcommand{\marginaltwo}{\ensuremath{\mu_1}}
\newcommand{\iterateone}{\ensuremath{u}}
\newcommand{\iteratetwo}{\ensuremath{v}}
\newcommand{\ProdSpace}{\ensuremath{U}}
\newcommand{\learnedInv}[1]{#1^{\dagger}}
\DeclareMathOperator{\ForwardOpInvLearned}{\ensuremath{\learnedInv{\ForwardOp}_{\vparam}}}
\newcommand{\stochastic}[1]{\mathsf{#1}}
\newcommand{\stsignal}{\stochastic{\signal}}
\newcommand{\stdata}{\stochastic{\data}}
\DeclareMathOperator{\Expect}{\mathbb{E}}
\newcommand{\signaltrue}{\signal_{\text{true}}}
\newcommand{\primal}{\ensuremath{f}}
\newcommand{\dual}{\ensuremath{h}}
\addunit{\pixel}{pixel}
\addunit{\voxel}{voxel}
\addunit{\decibel}{dB}
\addunit{\byte}{B}
\addunit{\hounsfield}{HU}
\title{Learning to solve inverse problems \\ using Wasserstein loss}
\author{
  Jonas~Adler$^{1,2}$, \; Axel~Ringh$^{1}$, \; Ozan \"Oktem$^{1}$, \; Johan~Karlsson$^{1}$
  \\
  $^1$Department of Mathematics,
  KTH Royal Institute of Technology \\
  $^2$Elekta Instrument AB \\ 
  \{ \texttt{jonasadl}, \texttt{aringh}, \texttt{ozan} \} \texttt{@kth.se}, \; \texttt{johan.karlsson@math.kth.se} \\
  \\
}
\begin{document}

\maketitle

\begin{abstract}
  We propose using the Wasserstein loss for training in inverse problems. In particular, we consider a learned primal-dual reconstruction scheme for ill-posed inverse problems using the Wasserstein distance as loss function in the learning. This is motivated by miss-alignments in training data, which when using standard mean squared error loss could severely degrade reconstruction quality. We prove that training with the Wasserstein loss gives a reconstruction operator that correctly compensates for miss-alignments in certain cases, whereas training with the mean squared error gives a smeared reconstruction. Moreover, we demonstrate these effects by training a reconstruction algorithm using both mean squared error and optimal transport loss for a problem in computerized tomography.
\end{abstract}

\section{Introduction}
In inverse problems the goal is to determine model parameters from indirect noisy observations.  Example of such problems arise in many different fields in science and engineering, e.g., in X-ray \ac{CT} \cite{natterer2001mathematical},
electron tomography \cite{oktem2015mathematics},
and magnetic resonance imaging \cite{brown2014magnetic}. 
Machine learning has recently also been applied in this area, especially in imaging applications. 
Using supervised machine-learning to solve inverse problems in imaging requires training data where ground truth images are paired with corresponding noisy indirect observations. The learning provides a mapping that associates observations to corresponding images. However, in several applications there are difficulties in obtaining the ground truth, e.g., in many cases it may have undergone a distortion. For example, a recent study showed that MRI images may be distorted by up to 4 mm due to, e.g., inhomogeneities in the main magnetic field \cite{Walker2014}. If these images are used for training, the learned MRI reconstruction will suffer in quality. Similar geometric inaccuracies arise in several other imaging modalities, such as Cone Beam CT and full waveform inversion in seismic imaging.

This work seeks to provide a scheme for learning a reconstruction scheme for an ill-posed inverse problem with a Wasserstein loss by leveraging upon recent advances in efficient solutions of optimal transport \cite{cuturi2013sinkhorn, karlsson2016generalized} and learned iterative schemes for inverse problems \cite{adler2017learned}. The proposed method is demonstrated on a computed tomography example, where we show a significant improvement compared to training the same network using mean squared error loss. In particular, using the Wasserstein loss instead of standard mean squared error gives a result that is more robust against potential miss-alignment in training data. 

\section{Background}\label{sec:background}

\subsection{Inverse problems}\label{subsec:inverseprob}
In inverse problems the goal is to reconstruct an estimate of the signal $\truesignal \in \RecSpace$ from noisy indirect measurements (data) $\data \in \DataSpace$ assuming
\begin{equation}\label{eq:InvProb}
\data = \ForwardOp(\truesignal) + \noise.  
\end{equation}  
In the above $\RecSpace$ and $\DataSpace$ are referred to as the reconstruction and data space, respectively. Both are typically Hilbert or Banach spaces. Moreover $\ForwardOp \colon \RecSpace \to \DataSpace$ denotes the forward operator, which models how a given signal gives rise to data in absence of noise. Finally, $\noise \in \DataSpace$ is the noise component of data.
Many inverse problems of interest are ill-posed, meaning that there is no uniques solution to \eqref{eq:InvProb} and hence there is no inverse to $\ForwardOp$. Typically reconstructions of $\truesignal$ are sensitive to the data and small errors gets amplified. One way to mitigate these effects is to use regularization \cite{engl2000regularization}. 

\paragraph{Variational regularization} 
In variational regularization one formulates the reconstruction problem as an optimization problem. To this end, one introduces a data discrepancy functional $f \mapsto \LogLikelihood(\ForwardOp(\signal), \data)$, where $\LogLikelihood: \DataSpace \times \DataSpace \to \Real$, that quantifies the miss-fit in data space, and a regularization functional $\Regularizer \colon \RecSpace \to \Real$  that encodes a priori information about $\truesignal$ by penalizing undesirable solutions. 
For a given $\data \in \DataSpace$, this gives an optimization problem of the form
\begin{equation}\label{eq:OptProb}
	\min_{\signal \in \RecSpace} \LogLikelihood(\ForwardOp(\signal), \data) + \lambda \Regularizer(\signal).
\end{equation}
Here, $\lambda$ acts as a trade-off parameter between the data discrepancy and regularization functional.
In many cases 
$\LogLikelihood$ is taken to be the negative data log-likelihood, e.g.,  $\LogLikelihood(\ForwardOp(\signal), \data)=\|\ForwardOp(\signal)-\data\|_2^2$ in the case of additive white Gaussian noise.
Moreover, a typical choice for regularization functional is total variation (TV) regularization,  $\Regularizer(\signal) =  \lVert \nabla \signal \rVert_1$ \cite{rudin1992nonlinear}.
These regularizers typically give rise to large scale and non-differentiable optimization problems, which requires advanced optimization algorithms.


\paragraph{Learning for inverse problems} 
In many applications, and so also in inverse problems, data driven approaches have shown dramatic improvements over the state-of-the-art \cite{LeBeHi15}. Using supervised learning to solve an inverse problem amounts to finding a parametrized operator $\ForwardOpInvLearned \colon \DataSpace \to \RecSpace$ where the parameters $\vparam$ are selected so that
\[
	\data = \ForwardOp(\truesignal) + \noise \implies \ForwardOpInvLearned(g) \approx \signaltrue.
\]
For inverse problems in image processing, such as denoising and deblurring, we have $\DataSpace = \RecSpace$ and it is possible to apply a wide range of widely studied machine learning techniques, such as fully convolutional deep neural networks with various architectures, including fully convolutional networks \cite{NIPS2008_3506} and denoising auto-encoders \cite{NIPS2012_4686}.

However, in more complicated inverse problems as in tomography, the data and reconstruction spaces are very different, e.g., their dimension after discretization may differ. For this reason, learning a mapping from $\DataSpace$ to $\RecSpace$ becomes nontrivial, and classical architectures that map, e.g., images to images using convolutional networks cannot be applied as-is. One solution is to use fully-connected layers as in \cite{PaGiKaLoMa04} for very small scale tomographic reconstruction problems. A major disadvantage with such a fully learned approach is that the parameters space has to be very high dimensional in order to be able to learn both the prior and the data model, which often renders it infeasible due to training time and lack of training data.

A more successful approach is to first apply some crude reconstruction operator $\ForwardOpInvP \colon \DataSpace \to \RecSpace$ and then use machine learning to post process the result. 
This separates the learning from the complications of mapping between spaces since the operator $\ForwardOpInvP$ can be applied off-line, prior to training. Such an approach has been demonstrated for tomographic reconstruction in \cite{PeBa13, WuGhChMa16}. Its drawback for ill posed inverse problems is that information is typically lost by using $\ForwardOpInvP$, and this information cannot be recovered by post processing. 

Finally, another approach is to incorporate the forward operator $\ForwardOp$ and its adjoint $\ForwardOp^*$ into the neural network. In these learned iterative schemes, classical neural networks are interlaced with applications of the forward and backward operator, thus allowing for the learned reconstruction operator to work directly from data without having to learn the data model. 
For example, in \cite{YaSuLiXu16}  an \ac{ADMM}-like scheme for Fourier inversion is learned and \cite{PuWe17} consider solving inverse problems typically arising in image restoration by a learned gradient-descent scheme. In \cite{adler2017solving} this later approach is shown to be applicable to large scale tomographic inversion. Finally, in \cite{adler2017learned} they apply learning in both spaces $\RecSpace$ and $\DataSpace$, yielding a Learned Primal-Dual scheme, and show that it outperforms learned post-processing for reconstruction of medical CT images.

\paragraph{Loss functions for learning}
Once the $\vparam$ parametrization of $\smash{\ForwardOpInvLearned}$ is set, the parameters are typically chosen by minimization of some loss functional $\loss$. Without doubt, the most common loss function is the mean squared error, also called $\mathcal{L}_2$ loss, given by
\begin{equation}\label{eq:L2Loss}
	\loss(\Theta) = \Expect_{\stsignal,\stdata} \Big[ \| \ForwardOpInvLearned(\stdata) - \stsignal \|_2^2 \Big].
\end{equation}
It has however been noted that it is sub-optimal for imaging, and a range of other loss functions have been investigated. These include the classical $\ell_p$ norms and the structural similarity index (SSIM) \cite{Zhao2017}, as well as more complex losses such as perceptual losses \cite{Johnson2016} and adversarial networks \cite{DBLP:journals/corr/MardaniGCVZATHD17}. 

Recently, optimal mass transport has also 
been considered as loss function for classification \cite{frogner2015learning} and generative models \cite{arjovsky2017wasserstein}. In this work we consider using optimal transport for training a reconstruction scheme for ill-posed inverse problems.

\subsection{Optimal mass transport and Sinkhorn iterations}\label{subsec:omt_sinkhorn}
In optimal mass transport the aim is to transform one distribution into another by moving the mass in a way that minimizes the cost of the movement.
For an introduction and overview of the topic, see, e.g., the monograph \cite{villani2008optimal}. Lately, the area has attracted a lot of research \cite{cuturi2013sinkhorn, cuturi2015asmoothed, chizat2015unbalanced} with applications to, e.g., signal processing \cite{haker2004optimal, georgiou2009metrics, jiang2012geometric, engquist2014application} and inverse problems \cite{benamou2015iterative, karlsson2016generalized}.

The optimal mass transport problem can be formulated as follows: let $\domain \subset \mR^d$ be a compact set, and let $\marginalone$ and $\marginaltwo$ be two measures, defined on $\domain$, with the same total mass.
Given a cost $c : \domain \times \domain \to \mR_+$ that describes the cost for transporting a unit mass from one point to another, find a (mass preserving) transference plan $M$ that is as cheap as possible. Here, the transference plan characterizes how to move the mass of $\marginalone$ in order to deform it into $\marginaltwo$. Letting the transference plan be a nonnegative measure $dM$ on the space $\domain\times \domain$ yields a linear programming problem in the space of measures:
\begin{align}\label{eq:Kantorovich}
T(\marginalone, \marginaltwo) =\quad
\min_{dM\ge 0}\quad&
\int_{(x_0,x_1)\in \domain\times \domain}c(x_0,x_1)dM(x_0,x_1)\\
\mbox{subject to }\; &\marginalone(x_0)dx_0=\int_{x_1\in \domain}dM(x_0,x_1),\nonumber\\
&\marginaltwo(x_1)dx_1=\int_{x_0\in \domain}dM(x_0,x_1).\nonumber
\end{align}
Although this formulation is only defined for measures $\marginalone$ and $\marginaltwo$ with the same total mass, it can also be extended to handle measures with unbalanced masses \cite{georgiou2009metrics, chizat2015unbalanced}. Moreover, 
under suitable conditions one can define the Wasserstein metrics $W_p$ using $T$, by taking $c(x_0,x_1)=d(x_0,x_1)^p$ for $p \ge 1$ and where $d$ is a metric on $\Omega$, and $W_p(\marginalone, \marginaltwo) := T(\marginalone, \marginaltwo)^{1/p}$  \cite[Definition 6.1]{villani2008optimal}. As the name indicates, $W_p$ is a metric on the set of nonnegative measures on $\domain$ with fixed mass \cite[Theorem 6.9]{villani2008optimal}, and $T$ is weak$^*$ continuous on this set.
One important property is that $T$ (and thus also $W_p$) does not only compare objects point by point, as standard $L^p$ metrics, but instead quantifies how the mass is moved. This makes optimal transport natural for quantifying uncertainty and modelling deformations \cite{jiang2012geometric,karlsson2013uncertainty}.

One way to solve the optimal transport problem in applications is to discretize $\domain$ and solve the corresponding finite-dimensional linear programming problem.
In this setting the two measures are represented by point masses on the discretization grid, i.e., by two vectors $\marginalone, \marginaltwo\in \mR_+^{n}$ where the element $[\mu_k]_i$ corresponds to the mass in the point $x_{(i)}\in \domain$ for $i=1,\ldots, n$ and $k=0,1$.
Moreover, a transference plan is represented by a matrix $M \in \mR_+^{n \times n}$ where the value $m_{ij}:=[M]_{ij}$ denotes the amount of mass transported from point $x_{(i)}$ to $x_{(j)}$. The associated cost of a transference plan is $\sum_{i,j=1}^{n}c_{ij}m_{ij}=\tr(C^TM)$, where $[C]_{ij}=c_{ij}=c(x_{(i)},x_{(j)})$ is the transportation cost from $x_{(i)}$ to $x_{(j)}$, and by discretizing the constraints we get that $M$ is a feasible transference plan from 
$\marginalone$ to $\marginaltwo$ if the row sums of $M$ is $\marginalone$ and the column sums of $M$ is $\marginaltwo$.
 The discrete version of \eqref{eq:Kantorovich} thus takes the form
\begin{align}\label{eq:T}
T(\marginalone, \marginaltwo) = 
\quad \min_{M\ge 0}\qquad  & \tr(C^T M)\\
 \mbox{subject to}\quad &\marginalone=M \ett_{n}
 \quad 
 \marginaltwo=M^T \ett_{n}, \nonumber
\end{align}
where $M \geq 0$ denotes element-wise non-negativity of the matrix. However, even though \eqref{eq:T} is a linear programming problem it is in many cases computationally infeasible due to the vast number of variables. Since $M \in \mR_+^{n \times n}$ the number of variables is $n^2$, and thus if one seek to solve the optimal transport problem between two $512\times 512$ images this results in more than $6 \cdot 10^{10}$ variables.

One approach for addressing this problem was proposed by Cuturi \cite{cuturi2013sinkhorn} that introduces an entropic regularizing term $D(M)=\sum_{i,j=1}^{n} (m_{ij}\log(m_{ij})-m_{ij}+1)$ for approximating the transference plan, so the resulting perturbed optimal transport problem reads as
\begin{align}\label{eq:Teps}
\quad \min_{M\ge 0} \qquad &  \tr(C^T M)+ \varepsilon D(M)\\
\mbox{subject to} \quad  & \marginalone=M \ett_{n} 
\quad 
\marginaltwo=M^T \ett_{n} \nonumber.
\end{align}
One can show that an optimal solution to \eqref{eq:Teps} is of the form
\begin{equation}\label{eq:structure}
M=\diag(\iterateone)K \diag(\iteratetwo),
\end{equation}
where $K=\exp(-C/\varepsilon )$ (point-wise exponential) is known, and $\iterateone, \iteratetwo\in \mR_+^{n}$ are unknown. This shows that the solution is parameterized by only $2n$ variables. Moreover, the two vectors can be computed iteratively by so called Sinkhorn iterations, i.e., alternatingly compute $\iterateone$ and $\iteratetwo$ that matches $\marginalone$ and $\marginaltwo$ respectively. This is summarizied in Algorithm~\ref{alg:sinkhorn} where $\odot$  denotes elementwise multiplication and $./$ elementwise division.
The procedure has been shown to have a linear convergence rate, see \cite{cuturi2013sinkhorn} and references therein. 

Moreover, when the underlying cost $c(x_0,x_1)$ is translation invariant the discretized cost matrix $C$, and thus also the transformation $K$, gets a Toeplitz-block-Toeplitz structure. This structure can be used in order to compute $K\iteratetwo$ and $K^T\iterateone$ efficiently using the fast Fourier transform in $\ordo(n \log n)$, instead of naive matrix-vector multiplication in $\ordo(n^2)$ \cite{karlsson2016generalized}. This is crucial for applications in imaging since for images of size $512 \times 512$ pixels one would have to explicitly store and multiply with matrices of size $262144 \times 262144$ .
	\begin{algorithm}[H]
		\caption{Sinkhorn iterations for computing entropy-regularized optimal transport \cite{cuturi2013sinkhorn}}\label{alg:sinkhorn}
		\begin{algorithmic}[1]		  
		  \State {\bf Input} $C, \varepsilon, \marginalone, \marginaltwo$
			\State initialize $\iteratetwo_0 > 0$ and $K = \exp(-C/\varepsilon )$
			\For{$i = 1, \dots, N$}
			\State $\iterateone_i \gets \marginalone./(K\iteratetwo_{i-1})$
			\State $\iteratetwo_i \gets \marginaltwo./(K^T\iterateone_{i})$
			\EndFor
			\State \Return $\iterateone_N^T (K \odot C) \iteratetwo_N$
		\end{algorithmic}
	\end{algorithm}

\section{Learning a reconstruction operator using \mbox{Wasserstein} loss} 
\label{sec:learningwithomt}

In this work we propose to use entropy regularized optimal transport \eqref{eq:Teps} to train a reconstruction operator, i.e., to select the parameters as
\begin{equation}\label{eq:optTransLoss}
	\Theta^* \in \arg\min_\Theta \Expect_{\stsignal,\stdata} \Big[ T( \ForwardOpInvLearned(\stdata), \stsignal ) \Big].
\end{equation}
This should give better results when data $g$ is not aligned with the ground truth $\signal$. To see this, consider the case when $f$ is a point mass. In that case training the network with the $\mathcal{L}_2$ loss \eqref{eq:L2Loss} will (in the ideal case) result in a perfect reconstruction composed with a convolution that ``smears'' the reconstruction over the area of possible miss-alignment. On the other hand since optimal mass transport does not only compare objects point-wise, the network will (in the ideal case) learn a perfect reconstruction combined with a movement of the object to the corresponding barycenter (centroid) of the miss-alignment. These statements are made more precise in the following propositions. Formal definitions and proofs are deferred to the appendix.

\begin{proposition}\label{prop:l2_smoothin}
Let $g \in \mathcal{L}_2(\mR^n)$, let $\tau$ be a $\mR^n$-valued random variable with probability measure $dP(t)$, and let $g_\tau(x) := g(x - \tau)$. Then there exists a function $f \in \mathcal{L}_2(\mR^n)$ that minimizes $\Expect_\tau[\| f - g_\tau \|_2^2]$, and this $f$ has the form
\[
f(x) = (dP * g)(x) := \int_{\mR^n} g(x - t) dP(t).
\]
\end{proposition}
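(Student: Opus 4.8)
The plan is to recognize this as the standard Hilbert-space fact that the minimizer of the expected squared distance to a random element is that element's mean. Here the ambient Hilbert space is $\mathcal{L}_2(\mR^n)$ and the random element is $g_\tau$, so I expect the minimizer to be the (Bochner) mean $\Expect_\tau[g_\tau]$, which I will then identify with the convolution $dP * g$.

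First I would check that everything is well defined. Since translation is an isometry of $\mathcal{L}_2(\mR^n)$, we have $\|g_\tau\|_2 = \|g\|_2$ for every $\tau$, so $\Expect_\tau[\|g_\tau\|_2^2] = \|g\|_2^2 < \infty$ and the objective $\Expect_\tau[\|f - g_\tau\|_2^2]$ is finite for every $f \in \mathcal{L}_2(\mR^n)$. The map $\tau \mapsto g_\tau$ is strongly continuous into $\mathcal{L}_2(\mR^n)$, hence measurable, and $\Expect_\tau\|g_\tau\|_2 = \|g\|_2 < \infty$, so the Bochner integral $\bar g := \Expect_\tau[g_\tau] = \int g_\tau\, dP(\tau)$ exists as an element of $\mathcal{L}_2(\mR^n)$, with $\|\bar g\|_2 \le \|g\|_2$ by Minkowski's integral inequality.

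Next I would expand the objective using the inner-product structure, writing $\|f - g_\tau\|_2^2 = \|f\|_2^2 - 2\langle f, g_\tau\rangle + \|g_\tau\|_2^2$ and taking the expectation. Since the Bochner integral commutes with the bounded linear functional $\langle f, \cdot\rangle$, the cross term becomes $\langle f, \bar g\rangle$, and the objective reduces to $\|f\|_2^2 - 2\langle f, \bar g\rangle + \|g\|_2^2 = \|f - \bar g\|_2^2 + \bigl(\|g\|_2^2 - \|\bar g\|_2^2\bigr)$. The second bracket is independent of $f$, so this bias--variance decomposition shows the objective is minimized, uniquely, at $f = \bar g$.

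It remains to identify $\bar g$ with $dP * g$, and this is the step I expect to require the most care, since pointwise evaluation is not a continuous functional on $\mathcal{L}_2$. I would argue weakly: for any test function $\phi \in \mathcal{L}_2(\mR^n)$, the defining property of the Bochner integral gives $\langle \bar g, \phi\rangle = \Expect_\tau[\langle g_\tau, \phi\rangle] = \int\!\!\int g(x-\tau)\phi(x)\,dx\,dP(\tau)$, and Fubini's theorem applies because $\int\!\!\int |g(x-\tau)|\,|\phi(x)|\,dx\,dP(\tau) \le \|g\|_2\|\phi\|_2$ by Cauchy--Schwarz and translation invariance. Exchanging the order of integration yields $\langle dP * g, \phi\rangle$, and since this holds for all $\phi$ we conclude $\bar g = dP * g$ in $\mathcal{L}_2(\mR^n)$, which completes the proof.
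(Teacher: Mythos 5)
Your proof is correct, and it reaches the paper's formula by a genuinely more abstract route. The paper applies Fubini to rewrite the objective as $\int_{\mR^n}\bigl(\int_{\mR^n}(f(x)-g_t(x))^2\,dP(t)\bigr)\,dx$ and then completes the square \emph{pointwise in $x$}: for each fixed $x$ the inner integral is a scalar quadratic in $f(x)$, minimized by the scalar mean $\int g_t(x)\,dP(t)$, after which a separate computation (Fubini plus the arithmetic--geometric mean inequality) verifies that the resulting function lies in $\mathcal{L}_2(\mR^n)$. You instead complete the square once, in the Hilbert space itself, around the Bochner mean $\bar g=\Expect_\tau[g_\tau]$, and then identify $\bar g$ with the convolution $dP*g$ by testing against arbitrary $\phi\in\mathcal{L}_2(\mR^n)$. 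The two decompositions are the same bias--variance identity viewed at different levels. What your version buys: membership of the minimizer in $\mathcal{L}_2$, its uniqueness, and the finiteness of the ``constant'' term all come for free from the identity $\Expect_\tau[\|f-g_\tau\|_2^2]=\|f-\bar g\|_2^2+\|g\|_2^2-\|\bar g\|_2^2$ together with $\|\bar g\|_2\le\|g\|_2$, and you make explicit the strong measurability and integrability of $\tau\mapsto g_\tau$ that license exchanging $\Expect_\tau$ with the inner product --- points the paper leaves implicit. What the paper's version buys: it needs nothing beyond Fubini and a scalar completion of the square, and it produces the pointwise formula for $f$ directly rather than through a duality argument, at the price of the extra step showing $f\in\mathcal{L}_2$. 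Both arguments are complete.
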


\begin{proposition}\label{prop:omt_barrycenter}
Let $\delta(x)$ be the Dirac delta function on $\mR^n$, let $\tau$ be a $\mR^n$-valued random variable with probability measure $dP(t)$, and let $\delta_\tau(x) := \delta(x - \tau)$. Then
\[
\Expect_\tau[ T(\delta_\tau, \mu) ]=\int_{\mR^n} \Big( \underbrace{\int_{\mR^n} c(t,x) dP(t)}_{:=F(x)} \Big) d\mu(x) \quad \mbox{ for } \mu\ge 0  \mbox{ and } \int_{\mR^n} d\mu(x)=1  
\]
and $\Expect_\tau[ T(\delta_\tau, \mu) ]=\infty$ otherwise. Furthermore, finding a $\mu$ that minimizes $\Expect_\tau[ T(\delta_\tau, \mu) ]$ is equivalent to finding the global minimizers to $F(x)$.
In particular, if (i) the probability measure $dP$ is symmetric around its mean, (ii) the underlying cost $c$ is of the form $c(t,x) = d(x - t)$, where $d$ is convex and symmetric, and (iii) $dP$ and $d$ are such that
\[
\int_{\mR^n} d(x - t) dP(t) \qquad \text{and} \qquad \int_{\mR^n} \partial d(x - t) dP(t)
\]
both exist and are finite for all $x \in \mR^n$, then $\mu(x) = \delta(x - \Expect [\tau])$ is an optimal solution. 
Furthermore, if $d$ is also strictly convex, then this is the unique minimizer.
\end{proposition}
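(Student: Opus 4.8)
The plan is to proceed in four stages: evaluate $T(\delta_\tau,\mu)$ for a fixed realization of $\tau$ and then take the expectation; reduce the minimization over $\mu$ to the minimization of $F$; locate the minimizer of $F$ using convexity and symmetry; and finally establish uniqueness under strict convexity.

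First I would observe that when the source marginal is a single point mass $\delta_\tau$, the transference plan in the Kantorovich problem~\eqref{eq:Kantorovich} is forced: the constraint on the first marginal concentrates all mass at $x_0=\tau$, so $dM(x_0,x_1)=\delta(x_0-\tau)\,d\mu(x_1)$ is the only feasible plan, and it is feasible precisely when $\mu\ge 0$ has unit mass. Substituting into the objective gives $T(\delta_\tau,\mu)=\int_{\mR^n} c(\tau,x)\,d\mu(x)$, and taking the expectation over $\tau$ while interchanging the order of integration (licensed by $c\ge 0$ via Tonelli) yields $\Expect_\tau[T(\delta_\tau,\mu)]=\int_{\mR^n} F(x)\,d\mu(x)$. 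Whenever $\mu$ fails to be a nonnegative unit-mass measure the transport problem is infeasible, so $T=\infty$.

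Next, since $\mu\mapsto\int F\,d\mu$ is linear and $\mu$ ranges over probability measures, I would invoke the elementary bound $\int_{\mR^n} F\,d\mu\ge\inf_x F(x)$, with equality exactly when $\mu$ is supported on the set of global minimizers of $F$. This yields the stated equivalence between minimizing $\Expect_\tau[T(\delta_\tau,\mu)]$ and minimizing $F$, and reduces everything to analyzing $F(x)=\int_{\mR^n} d(x-t)\,dP(t)$ under hypotheses (i)--(iii). For the minimizer, convexity of $d$ passes through the integral, so $F$ is convex and it suffices to exhibit a critical point. Differentiating under the integral sign, licensed by the integrability hypothesis (iii), gives $\nabla F(x)=\int_{\mR^n}\partial d(x-t)\,dP(t)$. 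Evaluating at $x=\Expect[\tau]$ and substituting $s=t-\Expect[\tau]$, the resulting measure in $s$ is symmetric about the origin by (i), while $\partial d$ is odd since $d$ is even by (ii); the integral of an odd integrand against a symmetric measure vanishes, so $\nabla F(\Expect[\tau])=0$, and convexity promotes this critical point to a global minimizer. Hence $\mu(x)=\delta(x-\Expect[\tau])$ is optimal. If $d$ is moreover strictly convex, the same pointwise convexity estimate made strict shows $F$ is strictly convex, so it has a unique minimizer; since any optimal $\mu$ must be supported on the minimizers of $F$, the optimal measure is the unique Dirac $\delta(x-\Expect[\tau])$.

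I expect the main obstacle to be the rigorous handling of the symmetry argument when $d$ is only convex rather than smooth: one must read $\partial d$ as a measurable selection of the subgradient, verify that evenness of $d$ forces $\partial d(-y)=-\partial d(y)$ on the full-measure set of differentiability points, and confirm that the change of variables combined with the symmetry of $dP$ is compatible with differentiation under the integral. The feasibility dichotomy and the linear-programming reduction are routine by comparison.
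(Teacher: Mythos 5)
Your proposal is correct and follows essentially the same route as the paper's proof: the forced transference plan for a point-mass marginal, Fubini/Tonelli to obtain $\Expect_\tau[T(\delta_\tau,\mu)]=\int F\,d\mu$, the observation that optimal $\mu$ must be supported on the global minimizers of $F$, the symmetry/anti-symmetry cancellation at $x=\Expect[\tau]$, and strict convexity for uniqueness. The one place where the paper is slightly cleaner is the nonsmoothness issue you flag at the end: rather than differentiating under the integral sign and patching via a.e.\ differentiability (which is delicate if $dP$ is singular), the paper integrates the subgradient inequality $d(x)\geq d(y)+\langle \partial d(y),x-y\rangle$ against $dP$ to exhibit $\int \partial d(y-t)\,dP(t)\subset\partial F(y)$ directly, and then uses that $\partial d(-z)=-\partial d(z)$ as sets to conclude $0\in\partial F(\Expect[\tau])$.
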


To illustrate Propositions~\ref{prop:l2_smoothin} and ~\ref{prop:omt_barrycenter} we consider the following example.


\begin{example}\label{ex:uniform}
	Let $\tau$ be uniformly distributed on $[-1,1]$, and let $c(x_0, x_1) = (x_0 - x_1)^2$. This gives
	\[
	F(x) = \frac{1}{2} \int_{-1}^1 (x - t)^2 dt = \frac{1}{3} + x^2,
	\]
	which has minimum $x=0$, and hence the (unique) minimizer to  $\Expect_\tau[ T(\delta_\tau, \mu) ]$ is $\mu(x) = \delta(x)$. For the $\mathcal{L}_2$ case with the uniform distribution, the minimizer of $\Expect_\tau[\| f - g_\tau \|_2^2]$ is the smoothed function $g \ast \frac{1}{2}\chi_{[-1, 1]}$.
\end{example}


The most common choice of distance $c$ is to use the squared norm $c(x_0, x_1) = \|x_0 - x_1\|^2$, as in the previous example. In this case the result of Proposition~\ref{prop:omt_barrycenter} can be strengthened, as shown in the following example.

\begin{example}\label{ex:discr}
Let $\tau$ be a $\mR^n$-valued random variable with probability measure $dP(t)$ with finite first and second moments, and let $c(x_0, x_1) = \|x_0 - x_1\|^2$. This gives
\[
F(x) = \int_{\mR^n} (x - t)^2 dP(t) = x^2 - 2x \Expect[\tau] + \Expect[\tau^2],
\]
which has a unique global minimum in $x = \Expect[\tau]$ and hence $\mu(x) = \delta(x - \Expect[\tau])$.
\end{example}

\section{Implementation and evaluation}\label{sec:Eval}
We use the recently proposed learned primal-dual structure in \cite{adler2017learned} for learning a reconstruction operator $\smash{\ForwardOpInvLearned}$ for solving the inverse problem in \eqref{eq:InvProb}. 
In this algorithm, a sequence of small blocks work alternatingly in the data (dual) space $\DataSpace$ and the reconstruction (primal) space $\RecSpace$ and are connected using the forward operator $\ForwardOp$ and its adjoint $\ForwardOp^*$. The algorithm works with any differentiable operator $\ForwardOp$, but we state the version for linear operators for simplicity in \cref{alg:learned_pd}.

\begin{algorithm}[H]
	\caption{Learned Primal-Dual reconstruction algorithm}\label{alg:learned_pd}
	\begin{algorithmic}[1]
		\State Initialize $\primal_0 \in \RecSpace^{N_{\text{primal}}}, \dual_0 \in \ProdSpace^{N_{\text{dual}}}$
		\For{$i = 1, \dots, I$}
		\State $\dual_i \gets
		\Gamma_{\vparam_i^d}\bigl(\dual_{i - 1}, \ForwardOp(\primal_{i-1}^{(2)}), \data\bigr)$
		\State $\primal_i \gets 
		\Lambda_{\vparam_i^p}\bigl(\primal_{i - 1}, \ForwardOp^*(\dual_i^{(1)}) \bigr)$
		\EndFor
		\State $\ForwardOpInvLearned(\data) \mathrel{\mathop:}= \primal_I^{(1)}$
	\end{algorithmic}
\end{algorithm}

\begin{figure}[t]
	\centering	
	\includegraphics[width=0.95\linewidth]{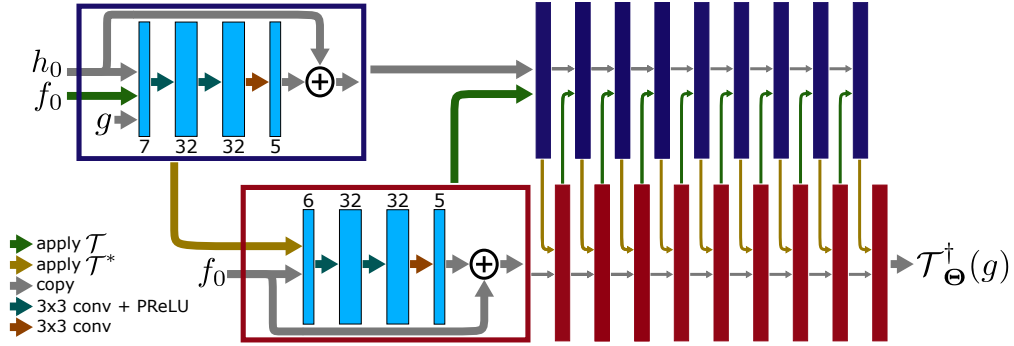}
	\caption{Network architecture used to solve the inverse problem. Dual and primal iterates are in blue and red boxes, respectively. Several arrows pointing to the same box indicates concatenation. The initial values $\primal_0, \dual_0$ enter from the left, while the data $\data$ is supplied to the dual iterates.}
	\label{fig:network_graph}
\end{figure}

The method was implemented using ODL \cite{adler2017ODL}, ASTRA \cite{vanaerle2016fast}, and TensorFlow \cite{abadi2016tensorflow}. We used the reference implementation%
\footnote{\url{https://github.com/adler-j/learned_primal_dual}} with default parameters, i.e., the number of blocks in the primal and dual space was $I=10$, and the number of primal and dual variables was set to $N_{\text{primal}} = N_{\text{dual}} = 5$. Moreover, the blocks used a residual structure and had three layers of $3 \times 3$ convolutions with $32$ filters. PReLU nonlinearities were used. Thus, this corresponds to a residual CNN with convolutional depth of $10 \cdot 2 \cdot 3 = 60$, as shown in graphical format in \cref{fig:network_graph}. We used zero initial values, $\primal_0 = \dual_0 = 0$. 

We compare a learned reconstruction operator of this form when trained using $\mathcal{L}_2$ loss \eqref{eq:L2Loss} and using optimal transport loss \eqref{eq:optTransLoss}. Moreover, the evaluation is done on a problem similar to the evaluation problem in \cite{adler2017learned, adler2017solving}, i.e., on a problem in computed tomography. More specifically, training is done on data that consists of randomly generated  circles on a domain of \unit{512 \times 512}{\pixel}, and the forward operator $\ForwardOp$ is the ray transform \cite{natterer2001mathematical}. What makes this an ill-posed problem is that the data acquisition is done from only 30 views with 727 parallel lines. Moreover, the source of noise is two-fold in this set-up:
(i) the pairs $(g_i, f_i)$ of data sets and phantoms are not aligned, meaning that the data is computed from a phantom with a random change in position. This random change is independent for the different circles, and for each circle it is a shift which is uniformly distributed over $[-40, 40]$ pixels, both in up-down and left-right direction.
(ii) on the data computed from the shifted phantom, 5\% additive Gaussian noise was added. For an example, see \cref{fig:example_data}.

\begin{figure}[t]
	\centering	
	\begin{subfigure}[t]{.28\linewidth}	
		\includegraphics[width=\linewidth, trim={23mm 17mm 32mm 6mm}, clip]{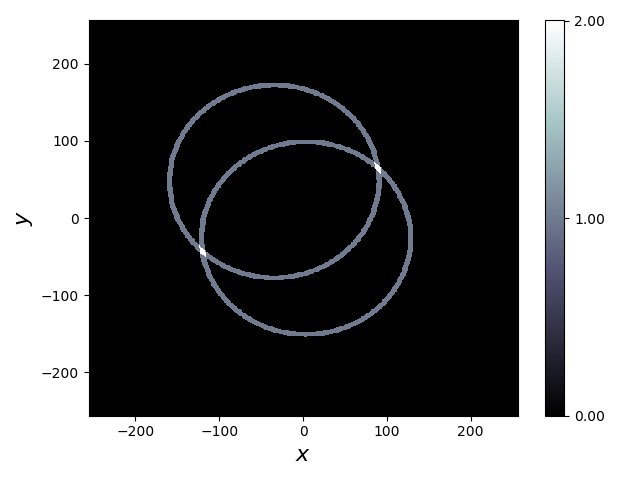}
		\caption{Phantom}
		\label{subfig:phantom}
	\end{subfigure}
	\begin{subfigure}[t]{.06\linewidth}
		\vspace{-50pt}		
		{\huge$\xrightarrow{}$}
	\end{subfigure}
	\begin{subfigure}[t]{.28\linewidth}	
		\includegraphics[width=\linewidth, trim={23mm 17mm 32mm 6mm}, clip]{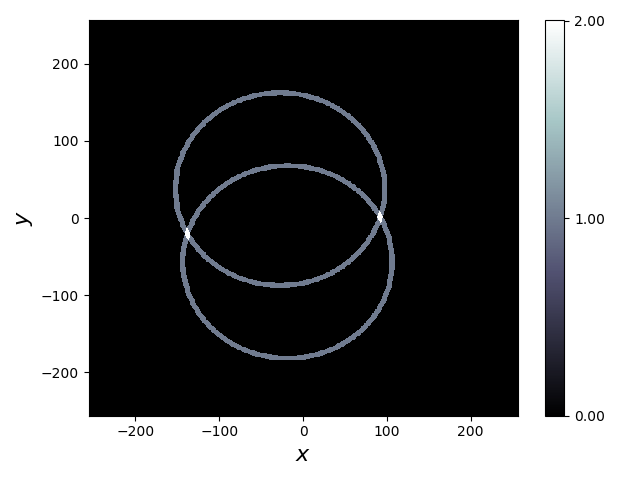}
		\caption{Translated phantom}
		\label{subfig:translated_phantom}
	\end{subfigure}
	\begin{subfigure}[t]{.06\linewidth}
		\vspace{-50pt}		
		{\huge$\xrightarrow{}$}
	\end{subfigure}
	\begin{subfigure}[t]{.28\linewidth}
		\includegraphics[width=\textwidth, trim={23mm 17mm 32mm 6mm}, clip]{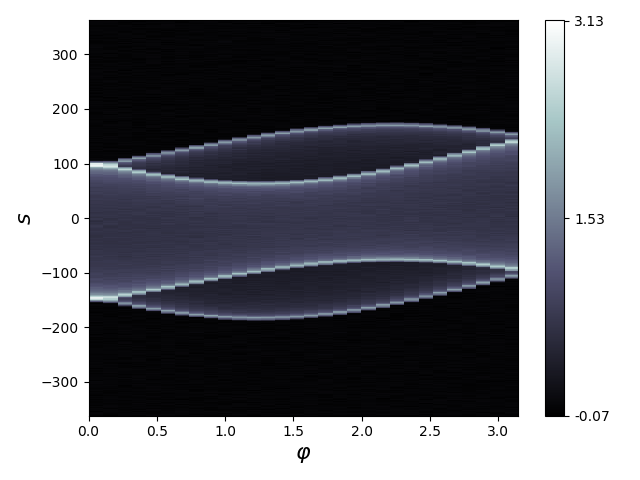}
		\caption{Data}
		\label{subfig:noisy_sinogram}
	\end{subfigure}
	\caption{Example of data generation process used for training and validation, where \ref{subfig:phantom} shows an example phantom, \ref{subfig:translated_phantom} is the phantom with a random translation and \ref{subfig:noisy_sinogram} is the data (sinogram) corresponding to \ref{subfig:translated_phantom} with additive white noise on top. The pair $(g_i, f_i)$ = (\ref{subfig:noisy_sinogram}, \ref{subfig:phantom}) is what is used in the training.} 
	\label{fig:example_data}
\end{figure}

The optimal mass transport distance computed with Sinkhorn iterations was used as loss function, where we used the transport cost
\[
	c(x_1, x_2) = \left( 1 - e^{-\|x_1 - x_2\|^4 / 80^4} \right).
\]
This was chosen since it heavily penalizes large movements, while not diverging to infinity which causes numerical instabilities. Moreover, $c(x_1, x_2)^{1/4}$ is in fact a metric on $\Real^2$ (see \cref{lem:metric} in the appendix) and thus $W_4(\marginalone, \marginaltwo) := T(\marginalone, \marginaltwo)^{1/4}$ gives rise to a Wasserstein metric on the space of images, where $T(\marginalone, \marginaltwo)$ is the optimal mass transport distance with the transport cost $c(x_1, x_2)$.
Since this cost is translation invariant, the matrix-vector multiplications $K\iterateone$ and $K^T\iteratetwo$ can be done with fast Fourier transform, as mentioned in \cref{subsec:omt_sinkhorn}, and this was implemented in Tensorflow. We used 10 Sinkhorn iterations with  entropy regularization $\varepsilon = 10^{-3}$, to approximate the optimal mass transport. Automatic differentiation was used to back-propagate the result during training. 

Since the optimal mass transport function \eqref{eq:Teps} is only finite for marginals $\marginalone$ and $\marginaltwo$ with the same total mass, in the training we normalize the output of the reconstruction $\smash{\ForwardOpInvLearned(g)}$ with ${\rm mass}(f) / {\rm mass}(\smash{\ForwardOpInvLearned(g)})$. This makes $\smash{\ForwardOpInvLearned(g)}$ invariant with respect to the total mass, which is undesirable. To compensate for this, a small penalization on the error in total mass was added to the loss function.

The training also followed \cite{adler2017learned} closely. In particular, we used $2 \cdot 10^4$ batches of size $1$, using the ADAM optimizer \cite{kingma2014adam} with default values except for $\beta_2 = 0.99$. The learning rate (step length) used was cosine annealing \cite{loshchilov2016sgdr} with initial step length $10^{-3}$. Moreover, in order to improve training stability we performed gradient norm clipping \cite{pascanu2012understanding} with norms limited to 1. The convolution parameters were initialized using Xavier initialization \cite{glorot2010understanding}, and all biases were initialized to zero. The training took approximately 3 hours using a single Titan X GPU. The source code used to replicate these experiments are available online \footnote{\url{https://github.com/adler-j/wasserstein_inverse_problems}}

Results are presented in \cref{fig:results}. As can be seen, the reconstruction using $\mathcal{L}_2$ loss ``smears'' the reconstruction to an extent where the shape is impossible to recover. On the other hand, the reconstruction using the Wasserstein loss retains the over-all global shape of the object, although relative and exact positions of the circles are not recovered.

\begin{figure}[t]
	\centering	
	\begin{subfigure}[t]{.24\linewidth}	
		\includegraphics[width=\linewidth, trim={23mm 17mm 32mm 6mm}, clip]{figures/512/truth.png}
		\caption{Phantom}
		\label{subfig:phantom2}
	\end{subfigure}
	\begin{subfigure}[t]{.24\linewidth}	
		\includegraphics[width=\linewidth, trim={23mm 17mm 32mm 6mm}, clip]{figures/512/translated_phantom.png}
		\caption{Translated phantom}
		\label{subfig:translated_phantom2}
	\end{subfigure}
	\begin{subfigure}[t]{.24\linewidth}	
		\includegraphics[width=\linewidth, trim={23mm 17mm 32mm 6mm}, clip]{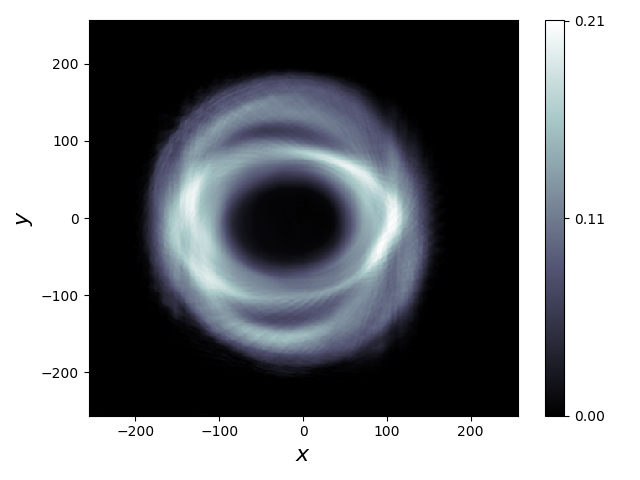}
		\caption{Mean squared error loss}
		\label{subfig:L2Loss}
	\end{subfigure}
	\begin{subfigure}[t]{.24\linewidth}
		\includegraphics[width=\textwidth, trim={23mm 17mm 32mm 6mm}, clip]{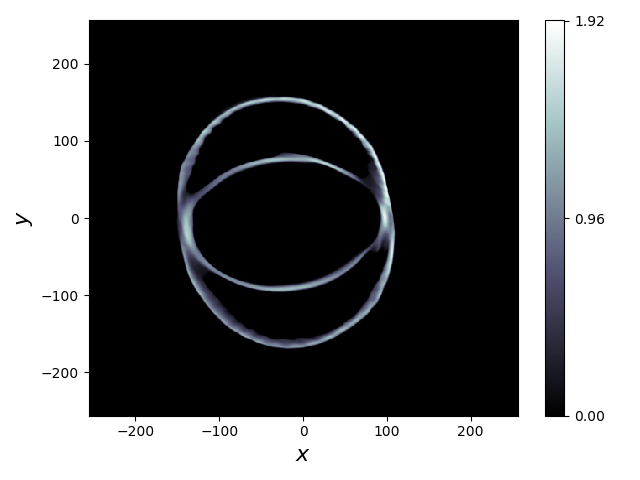}
		\caption{Optimal transport loss}
		\label{subfig:omtLoss}
	\end{subfigure}
	\caption{ In \ref{subfig:phantom2} we show the validation phantom, which was generated from the same training set but not used in training, in \ref{subfig:translated_phantom2} the translated phantom from which the validation data was computed, in \ref{subfig:L2Loss} a reconstruction with neural network trained using mean squared error loss \eqref{eq:L2Loss}, and in \ref{subfig:omtLoss} 	a reconstruction with neural network trained using optimal mass transport loss \eqref{eq:optTransLoss}.
	}
	\label{fig:results}
\end{figure}

\section{Conclusions and future work}
In this work we have considered using Wasserstein loss to train a neural network for solving ill-posed inverse problems in imaging where data is not aligned with the ground truth. We give a theoretical motivation for why this should give better results compared to standard mean squared error loss, and demonstrate it on a problem in computed tomography. In the future, we hope that this method can be applied to other inverse problems and to other problems in imaging such as segmentation.

%

\appendix
\section*{Appendix: Deferred definition and proofs}

\begin{proof}[Proof of Proposition~\ref{prop:l2_smoothin}]
To show that $f(x) = (dP*g)(x) \in \mathcal{L}_2(\mR^n)$ minimizes $\Expect_\tau[ \| f - g_\tau\|_2^2 ]$ we expand the expression and use Fubini's theorem to get
\[
\Expect_\tau[ \| f - g_\tau \|_2^2 ] = \int_{\mR^n} \left( \int_{\mR^n} |f(x) - g_t(x)|^2 dx \right) dP(t) = \int_{\mR^n} \left( \int_{\mR^n} (f(x) - g_t(x))^2 dP(t) \right) dx.
\]
Rearranging terms and using that $\int_{\mR^n} dP(t) dt = 1$, this can be written as
\[
\Expect_\tau[ \| f - g_\tau \|_2^2 ] =
\int_{\mR^n} \left( f(x) - \int_{\mR^n} g_t(x) dP(t) \right)^2 dx + c,
\]
where $c$ is a constant. Using this it follows that the minimizing $f$ is of the form
\[
f(x) = \int_{\mR^n} g_t(x) dP(t).
\]
To see that $f \in \mathcal{L}_2(\mR^n)$ we note that, by using Fubini's theorem, we have
\begin{align*}
\| f \|_ 2^2 &  = \int_{\mR^n} \left( \int_{\mR^n} g_t(x) dP(t) \right)^2 dx = \int_{\mR^n} \int_{\mR^n} \left( \int_{\mR^n} g_s(x) g_t(x) dx \right) dP(s) dP(t) \\
& \leq \int_{\mR^n} \int_{\mR^n} \left( \frac{1}{2} \int_{\mR^n} g_s(x)^2 + g_t(x)^2 dx \right) dP(s) dP(t) = \| g \|_2^2 < \infty
\end{align*}
where the first inequality is the arithmetic-geometric mean inequality. This completes the proof.
\end{proof}

\begin{definition}
Let $h : \mR^n \to \mR$. A subgradient to $h$ in a point $y$ is a vector $v$ so that
\[
h(x) \geq h(y) + \langle v, x - y \rangle, \qquad \forall \, x \in \mR^n. 
\]
The set of all subgradients in a point $y$ is called the subdifferential of $h$ at $y$, and is denoted by $\partial h (y)$. This is a set-valued operator, and for any measure $d\nu$ on $\mR^n$ we define 
$(d\nu * \partial h)(y) := \int_{\mR^n} \partial h(t) d\nu(y -t)$
to be the set-valued operator
\[
y \mapsto \left\{ \int_{\mR^n}  v(t) d\nu(y - t) \in \mR^n \mid v(t) \in \partial h(t) \right\}.
\]
\end{definition}

\begin{proof}[Proof of Proposition~\ref{prop:omt_barrycenter}]
We consider finding the marginal $\mu$ that minimize $\Expect_\tau[ T(\delta_\tau, \mu) ]$. Without loss of generality we assume that $\tau$ is zero-mean, since otherwise we simply consider $\tau - \Expect[\tau]$ which is a zero-mean random variable. First we note that $T(\delta_t, \mu)$ is only finite for nonegative measures $\mu$ with total mass $1$, and hence $\Expect_\tau[ T(\delta_\tau, \mu) ]$ is only finite for such measures. Second, for such a $\mu$ we have
\[
T(\delta_t, \mu) = \int_{\mR^n} c(t,x) d\mu(x),
\]
since one needs to transport all mass in $\mu$ into the point $t$ where $\delta_t$ has its mass. Using this and expanding the expression for the expectation gives that
\[
\Expect_\tau[ T(\delta_\tau, \mu) ] = \int_{\mR^n} T(\delta_t, \mu) dP(t) = \int_{\mR^n} \left( \int_{\mR^n} c(t,x) dP(t) \right) d\mu(x),
\]
where we have used Fubini's theorem in the last step. This completes the first half of the statement.

To prove the second half of the statement, note that the optimal $\mu$ have support only in the global minimas of the function 
\[
F(x) := \int_{\mR^n} d(x - t) dP(t) = \int_{\mR^n}  d(t) dP(x - t),
\]
which by assumption exists and is finite. Now, since $d$ is convex we have that
\begin{equation}\label{eq:d_subgrad}
d(x) \geq d(y) + \langle \partial d(y), x - y \rangle, \quad  \text{for all } x,y \in \mR^n,
\end{equation}
and convolving this inequality with $dP$ gives the inequality
\begin{equation}\label{eq:F_subgrad}
F(x) \geq F(y) + \langle \int_{\mR^n} \partial d(y - t) dP(t) , x - y \rangle, \quad \text{for all } \, x,y \in \mR^n,
\end{equation}
where all terms exist and are bounded by assumption. This shows that 
\[
\int_{\mR^n} \partial d(y - t) dP(t) \subset \partial F(y).
\]
Now, since $d$ is symmetric we have that $\partial d$ is anti-symmetric, i.e., that $\partial d(x) = - \partial d (-x)$, since 
\[
d(x) = d(-x) \geq d(-y) + \langle \partial d(-y), -x + y \rangle = d(y) + \langle -\partial d(-y), x - y \rangle.
\]
Therefore
\[
\partial F(0) \supset  \int_{\mR^n}  \partial d(-t) dP(t) \ni 0,
\]
where the last inclusion follows since $dP$ is symmetric and $\partial d$ is anti-symmetric. Now, since $0 \in \partial F(0)$ we have that $x=0$ is a global minimizer to $F(x)$ \cite[Theorem 16.2]{bauschke2011convex}, and thus one optimal solution to the problem is $\mu(x) = \delta(x)$. Now, if $d$ is strictly convex, the inequality \eqref{eq:d_subgrad} is strict for $x \neq y$, and thus \eqref{eq:F_subgrad} is also strict, which shows that the optimal solution is unique.
\end{proof}

\begin{lemma}\label{lem:metric}
Let $\| \cdot \|$ be a norm on $\Real^m$. Then
\[
	d(x_1, x_2) = \bigl( 1 - e^{-\|x_1 - x_2\|^n} \bigr)^{\frac{1}{n}}.
\]
is a metric on $\Real^m$ for $n \geq 1$.
\end{lemma}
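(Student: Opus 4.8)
The plan is to verify the four metric axioms, three of which are immediate and one of which carries all the content. Nonnegativity, the identity of indiscernibles, and symmetry follow directly from the properties of the norm: since $\|x_1-x_2\|^n \ge 0$ we have $e^{-\|x_1-x_2\|^n}\in(0,1]$, so $d$ is real and nonnegative; $d(x_1,x_2)=0$ forces $\|x_1-x_2\|=0$, i.e.\ $x_1=x_2$, and conversely; and $\|x_1-x_2\|=\|x_2-x_1\|$ gives symmetry. The only substantive step is the triangle inequality.

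To handle it I would write $d(x_1,x_2)=\phi(\|x_1-x_2\|)$ with the single-variable profile $\phi(t):=(1-e^{-t^n})^{1/n}$ on $[0,\infty)$, and reduce the triangle inequality to two properties of $\phi$: that it is non-decreasing and that it is subadditive, $\phi(a+b)\le\phi(a)+\phi(b)$. Indeed, combining the norm's own triangle inequality $\|x_1-x_3\|\le\|x_1-x_2\|+\|x_2-x_3\|$ first with monotonicity and then with subadditivity yields
\[
d(x_1,x_3)=\phi(\|x_1-x_3\|)\le \phi(\|x_1-x_2\|+\|x_2-x_3\|)\le \phi(\|x_1-x_2\|)+\phi(\|x_2-x_3\|).
\]
Monotonicity of $\phi$ is clear, being a composition of the non-decreasing maps $t\mapsto t^n$, $s\mapsto 1-e^{-s}$, and $r\mapsto r^{1/n}$. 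For subadditivity I would prove the stronger statement that $\phi$ is concave on $[0,\infty)$; since $\phi(0)=0$, concavity gives subadditivity by the standard estimate $\phi(a)\ge\frac{a}{a+b}\phi(a+b)$ and $\phi(b)\ge\frac{b}{a+b}\phi(a+b)$.

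To establish concavity I differentiate twice. Writing $w=t^n$ and $P=1-e^{-w}=\phi^n$, a direct computation gives $\phi''=\tfrac1n P^{1/n-2}\big[(\tfrac1n-1)(P')^2+PP''\big]$, so on $(0,\infty)$ the sign of $\phi''$ equals the sign of the bracket. Substituting $P'=e^{-w}w'$ and $P''=e^{-w}(w''-(w')^2)$, then $w'=nt^{n-1}$ and $w''=n(n-1)t^{n-2}$, and factoring out the positive quantity $e^{-w}\,n\,t^{n-2}$, the bracket collapses to $R(w)$ up to a positive factor, where
\[
R(w):=(n-1)(1-e^{-w})+w(e^{-w}-n).
\]
Thus concavity reduces to the single one-variable inequality $R(w)\le 0$ for $w\ge 0$ and $n\ge 1$.

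The main obstacle, and the only genuine computation, is this last inequality. I would verify it by noting $R(0)=0$ and $R'(0)=0$, and then computing $R''(w)=e^{-w}(w-n-1)$, which is negative on $[0,n+1)$ and positive on $(n+1,\infty)$. This forces $R'$, starting from $R'(0)=0$, to stay $\le 0$ throughout: it decreases away from $0$, attains its minimum at $w=n+1$, and thereafter increases only toward the limit $\lim_{w\to\infty}R'(w)=-n<0$, so it never returns to positive values. Hence $R$ is non-increasing with $R(0)=0$, giving $R\le 0$, therefore $\phi''\le 0$ and $\phi$ concave. The case $n=1$ is a transparent special case, since then $R(w)=w(e^{-w}-1)\le 0$ directly, and one sees $\phi(t)=1-e^{-t}$ is manifestly concave. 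This completes the triangle inequality and hence shows $d$ is a metric.
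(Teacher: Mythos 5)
Your proof is correct and follows essentially the same route as the paper's: both reduce the triangle inequality to monotonicity plus subadditivity of the scalar profile $t \mapsto (1-e^{-t^n})^{1/n}$, derive subadditivity from concavity together with vanishing at the origin, and reduce concavity to a one-variable exponential inequality (your $R(w)\le 0$ is exactly the paper's $h_n(w)\ge 0$ after multiplying by $-e^{w}$). The only differences are cosmetic: the paper deduces subadditivity by an integral comparison of $g'$ rather than the chord estimate, and settles the final inequality with one derivative instead of two.
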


\begin{proof}
It is easily seen that $d(x_1, x_2)$ is symmetric, nonnegative, and equal to zero if only if $x_1 = x_2$. Thus we only need to verify that the triangle inequality holds. To this end we note that if
		\begin{equation}\label{eq:wts}
		\bigl(1 - e^{-(a + b)^n} \bigr)^{\frac{1}{n}} \leq \bigl(1 - e^{-a^n}\bigr)^{\frac{1}{n}} + \bigl(1 - e^{- b^n}\bigr)^{\frac{1}{n}}, \text{ for all } a,b \geq 0,
		\end{equation}
		for all $n \geq 1$, then by taking $a = \| x_1 - x_2 \|$, $b = \| x_2 - x_3 \|$, and using the triangle inequality for the norm $\| \cdot \|$ we have that
		\begin{align*}
		d(x_1, x_3) & = \bigl( 1 - e^{-\| x_1 - x_3 \|^n} \bigr)^{\frac{1}{n}} \leq \bigl(1 - e^{-(\| x_1 - x_2 \| + \| x_2 - x_3 \|)^n}\bigr)^{\frac{1}{n}} \\
		& \leq \bigl(1 - e^{-\| x_1 - x_2 \|^n} \bigr)^{\frac{1}{n}} + \bigl(1 - e^{- \| x_2 - x_3 \|^n}\bigr)^{\frac{1}{n}} = d(x_1, x_2) + d(x_2, x_3).
		\end{align*}
    Therefore we will show that \eqref{eq:wts} holds for all $n \geq 1$, and to do so we will
    \begin{enumerate}
    \item[(i)] show that if a function $g : \mathbb{R}_+ \to \mathbb{R}_+$ fulfills $g(0) = 0$, $g(x)' \geq 0$, $g''(x) \leq 0$ for all $x \in \mathbb{R}_+$, then $g(x_1 + x_2) \leq g(x_1) + g(x_2)$,
    \item[(ii)] show that for $x \geq 0$ the map $x \mapsto (1 - e^{-x^n})^{\frac{1}{n}}$ fulfills the assumptions in (i) for any $n \geq 1$.
    \end{enumerate}     
    To show (i) we note that
    \begin{align*}
    g(x_1 + x_2) & = \int_0^{x_1+x_2} g'(t)dt = \int_0^{x_1} g'(t)dt + \int_{x_1}^{x_1+x_2} g'(t)dt \\
    & \leq \int_0^{x_1} g'(t)dt + \int_0^{x_2} g'(t)dt = g(x_1) + g(x_2),    
    \end{align*}
    where the inequality uses that $g'(t) \geq g'(x + t)$ for any $x,t \geq 0$ since $g''(x) \leq 0$ for all $x \geq 0$.

    To show (ii), let $g(x) := (1 - e^{-x^n})^{\frac{1}{n}}$ and observe that $g(0) = 0$. Differentiating $g$ twice gives
    \begin{align*}
    & g'(x) = \frac{e^{-x^n} (1 - e^{-x^n})^{\frac{1}{n}} x^{n - 1}}{1 - e^{-x^n}}\\
    & g''(x) = - \frac{(1-e^{-x^n})^{\frac{1}{n}} x^{n-2} \big( \overbrace{ne^{x^n}x^n - x^n + e^{x^n} - ne^{x^n} + n -1 }^{=: h_n(x^n)} \big)}{(e^{x^n} - 1)^2}.
    \end{align*}    	
    For $x \geq 0$ we see that $g'(x) \geq 0$ for all $n \geq 1$. Moreover, for $x \geq 0$ we see that $g''(x) \leq 0$ for all $x \geq 0$ and for all $n \geq 1$ if and only if $h_n(x^n) \geq 0$. With the change of variable $x^n = y$, we thus want to show that $h_n(y) \geq 0$ for all $y \geq 0$ and all $n \geq 1$. To see this we note that $h_n(0) = 0$ and that
    \[
    h_n'(y) = ne^y y + e^y - 1 \geq 0 \; \text{ for all } y \geq 0 \text{ and } n\geq 1.
    \]
    This shows (ii), and hence completes the proof.
\end{proof}

\bibliographystyle{plain}
\small
\bibliography{refs}

\end{document}